\newtheorem{theorem}{Theorem}
\newtheorem{problem}{Problem}
\newtheorem{remark}{Remark}
\newtheorem{definition}{Definition}
\title{\LARGE \bf
Optimal Allocation of Many Robot Guards for Sweep-Line Coverage
}
\author{Si Wei Feng \and Teng Guo \and Jingjin Yu
\thanks{$^{1}$S. W. Feng, T. Guo and J. Yu are with the Department of Computer Science, Rutgers, the State University of New Jersey, New Brunswick, NJ 08901, USA. E-mails:         {\tt\small \{siwei.feng, teng.guo, jingjin.yu\}@rutgers.edu}
This work is supported in part by NSF award IIS-1845888 and an Amazon Research Award.
}%
}
\begin{document}

\maketitle
\thispagestyle{empty}
\pagestyle{empty}


\begin{abstract}
We study the problem of allocating many mobile robots for the execution of a pre-defined sweep schedule in a known two-dimensional environment, with applications toward search and rescue, coverage, surveillance, monitoring, pursuit-evasion, and so on.
The mobile robots (or agents) are assumed to have one-dimensional sensing capability 
with probabilistic guarantees that deteriorate as the sensing distance
increases.
In solving such tasks, a time-parameterized distribution of robots 
along the sweep frontier must be computed, with the objective to minimize 
the number of robots used to achieve some desired coverage quality 
guarantee or to maximize the probabilistic guarantee for a 
given number of robots. 
We propose a max-flow based algorithm for solving the allocation task, 
which builds on a decomposition technique of the workspace as a
generalization of the well-known boustrophedon decomposition. 
Our proposed algorithm has a very low polynomial running time and 
completes in under two seconds for polygonal environments with over $10^5$ 
vertices.
Simulation experiments are carried out on three realistic use cases
with randomly generated obstacles of varying shapes, sizes, and spatial distributions, which demonstrate the applicability and scalability our proposed method.

\noindent Introduction video: \href{https://youtu.be/8taX92rzC5k}{\nolinkurl{https://youtu.be/8taX92rzC5k}}.
\end{abstract}

\section{Introduction}

Searching for a static or moving target in a planar environment is a classic 
problem in robotics \cite{guibas1999visibility, suzuki1992searching, lavalle2000algorithm, stiffler2017persistent, kolling2007graph}. 
The setting applies to many real-world applications, including searching for
lost person/object, checking for potential hazards, and generally, search and rescue tasks conducted in a known environment. 
Research tackling this problem mostly focuses on devising a search plan with
different types of objectives, such as minimizing the total length of the
frontier of the search schedule from the start to the end 
\cite{kolling2017coordinated}, minimizing the number of robots used for 
the search plan in a visibility-based robot sensing model
\cite{megiddo1988complexity}, and so on.

In certain cases, the high-level plan for searching a given region may be already pre-determined and fixed. For example, in search and rescue efforts, a frequently carried out plan is to perform a single sweep of an environment with a marching frontier, which is easy to execute when many participating robots/agents are involved.
The high-level search plan may also be determined by existing algorithms that 
compute only the search frontier.
However, even in the case where the search plan consists of pre-determined 
sweep (frontier) lines, it remains non-trivial to find an optimal organization 
of the mobile robots to execute the search plan, for either minimizing the 
number of robots needed for a given sensing probability requirement or utilizing 
a fixed number of robots to maximize the minimum sensing probability of locating something in the environment. 

\begin{figure}[t]
\vspace{1.5mm}
    \centering

    \begin{subfigure}[t]{0.477\textwidth}
         \centering
         \includegraphics[width=\textwidth]{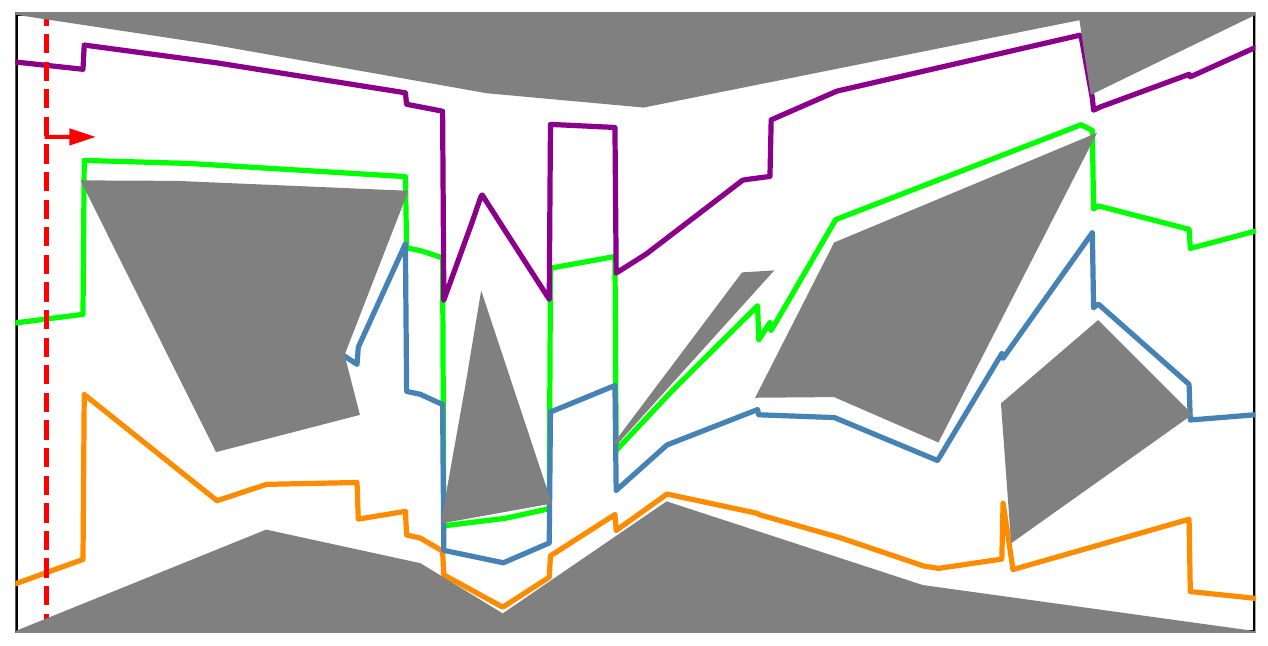}
         \caption{Optimal robot allocations in a vertical sweep}
         \label{fig:vertical}
     \end{subfigure}

    \medskip
    
    \begin{subfigure}[t]{0.23\textwidth}
         \centering
         \includegraphics[width=\textwidth]{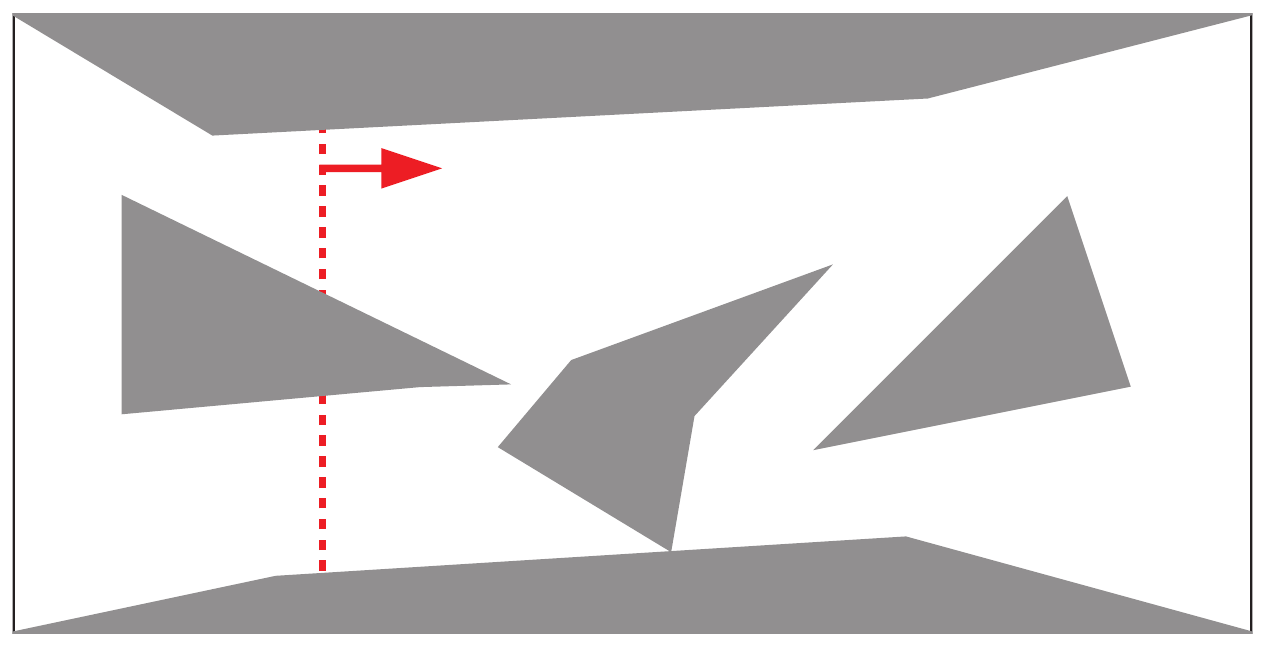}
         \caption{Vertical}
         \label{fig:vertical}
     \end{subfigure}
    \begin{subfigure}[t]{0.115\textwidth}
         \centering
         \includegraphics[width=\textwidth]{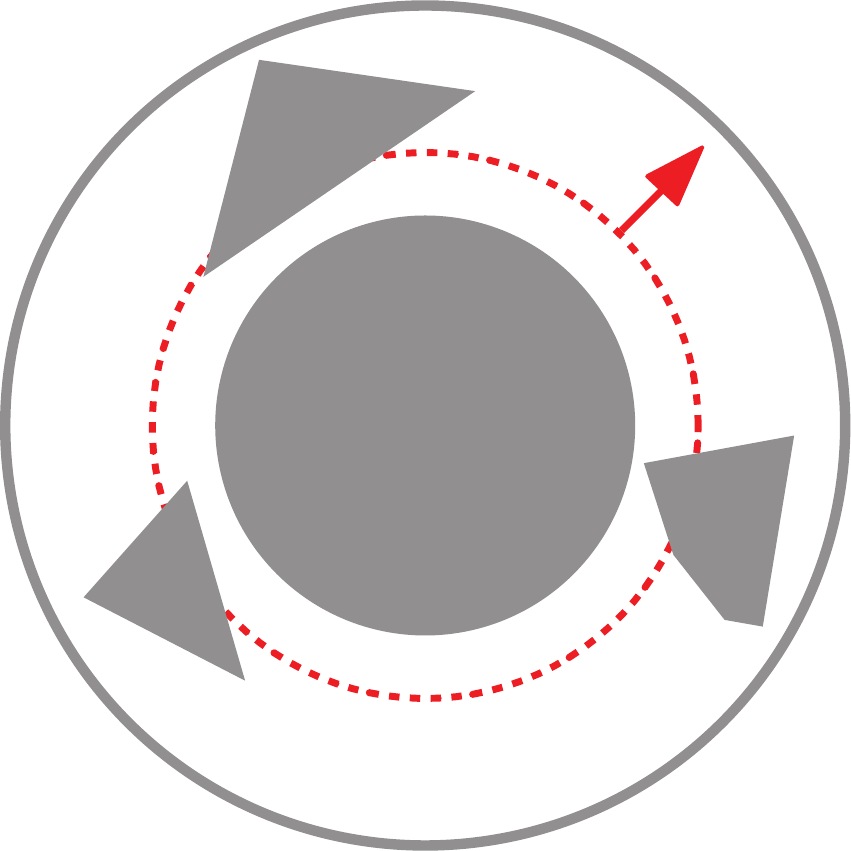}
         \caption{Circular}
         \label{fig:radial}
     \end{subfigure}
    \begin{subfigure}[t]{0.115\textwidth}
         \centering
         \includegraphics[width=\textwidth]{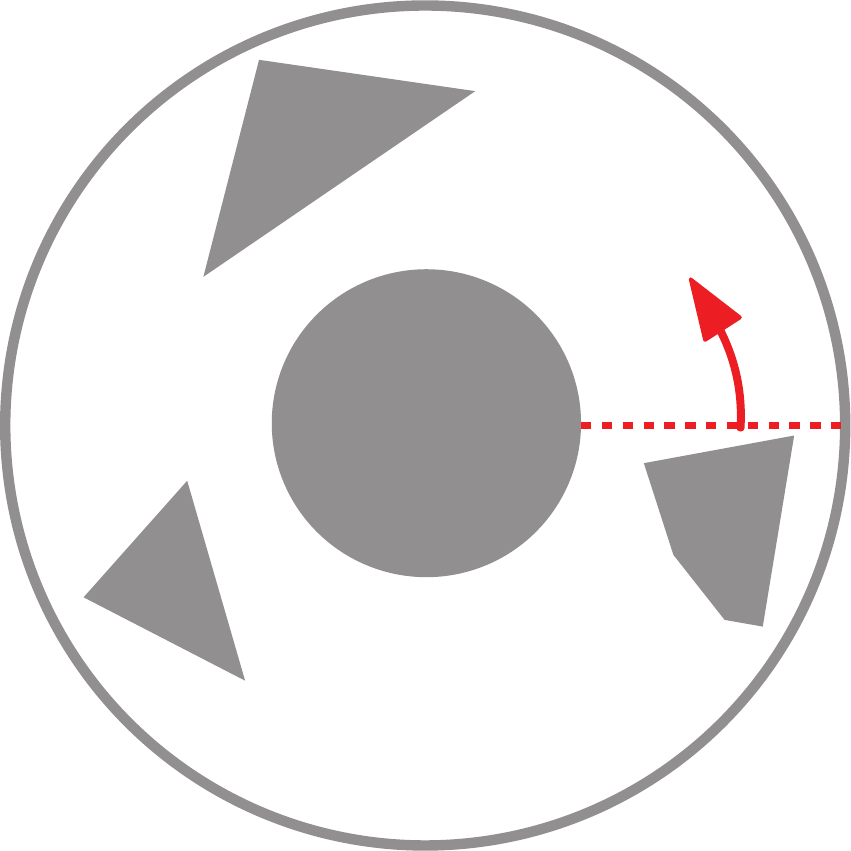}
         \caption{Radial}
         \label{fig:circular}
    \end{subfigure}
     
    \vspace{2mm}
    \caption{(a) An illustration of robots' locations along a left-to-right 
    vertical sweep schedule. Four robots are allocated to execute the vertical sweep, 
    and their trajectories are illustrated in different colors. 
    %
    (b)(c)(d) Illustrations of three use cases: vertical, circular, 
    and radial sweeps.}
    \label{fig:sweep}
\end{figure}

In this work, we address the challenge of how to best allocate 
many robots to execute a pre-determined search schedule for a known environment. 
More specifically, for a two-dimensional closed and bounded workspace, and a known 
search schedule which gives a search frontier for any given time, the robot guards 
are required to stay on the search frontier to carry out the sensing task. 
Because each robot's coverage quality deteriorates with distance, their relative 
placement on the frontier must be carefully decided to maximize the coverage quality. 
For the setup, we focus on the problem of finding the minimum number of robots required 
to execute a pre-determined sweep schedule such that a minimum sensing quality is 
guaranteed for each point in the workspace. Solutions to this minimization problem 
readily translate to solutions for maximizing the coverage quality for a fixed number 
of robots, which is a dual problem. 

In summary, the main contributions of this work are twofold. First, we generalize the 
notion of boustrophedon decomposition \cite{choset2000coverage}, which partitions the 
plane via vertical sweep, into a decomposition of the plane with a \emph{continuous 
monotone sweep schedule}, which we prove can always be represented as a directed 
acyclic graph (DAG). 
Then, we show the problem of minimizing the number of line guards required to execute a sweep schedule can be transformed into a network flow problem.
Since the generalized boustrophedon decomposition and the network flow problem can 
be solved in low polynomial time, our method achieves high levels of scalability.
The strengths of our method are further corroborated in extensive simulation experiments 
on three realistic use cases: \emph{vertical sweep}, \emph{circular sweep}, and 
\emph{radial sweep} (see, e.g., Fig.~\ref{fig:sweep} (b)(c)(d)).

\noindent
\textbf{Related Work.}
The study in this paper draws inspiration from the study of several 
lines of related problems. 
The Graph-Clear problem, formulated in \cite{kolling2007graph}, tasks a group of robots to search and clear an environment with the operations of blocking and clearing.
A follow-up work on Line-Clear \cite{kolling2017coordinated} uses line guards
with more focus on computational geometry in that
the objective is to minimize the maximum sweep line distance. Both of these problems are
NP-hard, establishing the difficulties of finding a sweep schedule for a planar environment.
The more general pursuit-evasion problem dates back to the research on \emph{search number}
on a discrete graph \cite{megiddo1988complexity}, 
followed by studies on pursuit and evasion continuous environment with 
visibility-based model \cite{guibas1999visibility, suzuki1992searching, lavalle2000algorithm, stiffler2017persistent}. 
The problem becomes the well-known art-gallery problem \cite{o1987art} when a static deployment of robots is sought after.
When working with known patrolling search frontiers, e.g., vertical sweep lines, 
this problem is analogous to the perimeter defense problem by placing guards on a static perimeter
to defend intruders \cite{shishika2020cooperative, macharet2020adaptive, chen2021optimal}.
Previously, we have also studied a version of static range guard placement problems for securing perimeters and regions \cite{feng2020optimally}.
In contrast to the pursuit-evasion algorithms that deal with searching dynamic and unpredictable targets that could escape, 
coverage planning/control-related algorithms become more suitable for searching or covering predictable or stationary targets,
e.g., room sweeping, pesticide and fertilizer spraying, persistent monitoring and so on \cite{cortes2004coverage, oksanen2009coverage, haksar2020spatial, wei2018coverage, deng2019constrained, lan2013planning, cassandras2012optimal, yu2015persistent, palacios2017optimal}. 

\section{Preliminaries}

In this section, we first describe a general probabilistic robot sensing model 
used in this paper and introduce the notion of a \emph{continuous monotone 
sweep schedule}, along which robots must be deployed to carry out the 
search task. 
Then, the problem of finding the minimum number of robots for a certain 
sweep schedule is formally defined.

\subsection{Robot Sensing Model}
\label{sec:sensing}

In this work, a robot is assumed to be a \emph{line guard} capable of sensing 
relevant events happening on a continuous line segment passing through the
robot. 
On the line segment guarded by a robot, for a (point) target at a distance 
of $r$ to the robot, the robot has probability $\rho(r)$ of 
detecting or capturing the target, where $\rho: \mathbb{R}^+\rightarrow[0,1]$ 
is a decreasing sensing probability function. 
%

Individual robots' 1D sensing range aligns to form a \emph{sweep line} or \emph{sweep frontier}.
It is assumed that robots carry out their sensing functions independently 
without interfering with each other.
For a point $p$ in the sweep line, if it falls between two robots $r_1, 
r_2$, the coverage of that point is provided by $r_1$ and $r_2$, and the 
probability of target detection at that point is computed as $1-(1-\rho(\ell_1))\cdot(1-\rho(\ell_2))=\rho(\ell_1) + \rho(\ell_2) - \rho(\ell_1) \cdot \rho(\ell_2)$, where $\ell_1$ (resp., $\ell_2$) is the distance between $p$ and $r_1$ (resp., $r_2$).
If it lies at the ends of the segment, the coverage of that point is only provided by the closest robot, and the coverage probability is simply $\rho(\ell)$, where $\ell$ is the distance between $p$ and the robot. These are illustrated in Fig.~\ref{fig:sensing}.
We note that \emph{sweep lines} are not necessarily straight. For example, a sweep line could be part of a circle in the case of circular sweeps (Fig.~\ref{fig:sweep}(c)).

\begin{figure}[ht]
    \centering
    \begin{tikzpicture}

        \draw[black,thick] (0, 0) -- (6, 0);
        
        \filldraw[black] (0.2, 0) circle (1.5pt)
        node[anchor=north]{$p_2$};
        
        \draw[thick, <-] (0.2, 0.2) -- (0.55, 0.2); 
        \node[] at (0.7,0.2) {$\ell_3$};
        \draw[thick, ->] (0.85, 0.2) -- (1.2, 0.2);
        
        \filldraw[black] (1.2, 0) circle (2pt) node[anchor=north]{$r_1$};
        
        \draw[thick, <-] (1.2, 0.2) -- (1.5, 0.2);
        \node[] at (1.65,0.2) {$\ell_1$};
        \draw[thick, ->] (1.8, 0.2) -- (2.1, 0.2);
        
        \filldraw[black] (2.1, 0) circle (1.5pt)
        node[anchor=north]{$p_1$};
        
        \draw[thick, <-] (2.1, 0.2) -- (2.4, 0.2);
        \node[] at (2.55,0.2) {$\ell_2$};
        \draw[thick, ->] (2.7, 0.2) -- (3, 0.2);
        
        \filldraw[black] (3, 0) circle (2pt) node[anchor=north]{$r_2$};
        
        \filldraw[black] (4.8, 0) circle (2pt) node[anchor=north]{$r_3$};

    \end{tikzpicture}
    \caption{Illustration of the robot sensing model. The segment is being covered by three robot guards $r_1 \sim r_3$. The coverage probability of $p_1$, falling between robots $r_1$ and $r_2$, is given as $\rho(\ell_1)+\rho(\ell_2)-\rho(\ell_1)\cdot\rho(\ell_2)$, and the coverage probability of $p_2$, covered only by $r_1$ is given as $\rho(\ell_3)$.}
    \label{fig:sensing}
\end{figure}
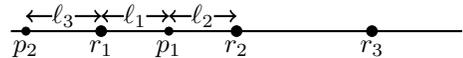

For covering a line segment with a length of $\ell$ and sensing function of $\rho$, 
we denote as $\zeta: \mathbb{R}^+ \rightarrow \mathbb{N}^+$ a primitive for computing 
the minimum number of robots needed to achieve the required minimum covering 
probability of $\rho_0$. 

Take the exponential decaying sensing probability function as an example, 
where $\rho(r) = e^{-c\cdot r}$, $c>0$ is some constant.
In this case, the maximum distance at the two ends to guarantee the sensing probability of $\rho_0$ is $d_1=-\ln (\rho_0)/c$. 
If a point is between two robots with distance $\ell_1$ and $\ell_2$ to the two robots, its sensing probability is bounded by

\begin{align*}
\rho(\ell_1) + \rho(\ell_2) - \rho(\ell_1)\cdot \rho(\ell_2)\geq 2e^{-\frac{c\cdot d}{2}} - e^{-c\cdot d}    \\
where\ d = \ell_1+\ell_2
\end{align*}

So, the required maximum distance between two neighboring robots is $d_2=-2\ln(1-\sqrt{1-\rho_0})/c$.
Therefore, the minimum number of robots required to cover a line segment with length $\ell$ is 
$\zeta(\ell) = \max(1, 1 + \lceil(\ell-2\cdot d_1)/d_2 \rceil )$.

\subsection{Problem Formulation}
In this paper, we work with a 2D compact (i.e., closed and bounded) workspace 
${\mathcal W} \subset \mathbb{R}^2$, which can contain a set of obstacles. 
An existing \emph{sweep schedule} is an input to the problem, which we
define the \emph{continuous monotone sweep schedule} for $\mathcal W$ as 
\begin{definition}[Continuous Monotone Sweep Schedule]
A function $P(t)$ that maps a positive timestamp $t$ to a continuous curve is a continuous monotone sweep schedule for $\mathcal W$ if $P(t)$ changes continuously, 
and for each point $p\in \mathcal W$, there exists a single $t'$ such that $p\in P(t')$.
\end{definition}

Intuitively, a continuous monotone sweep schedule defines a function 
that maps the time step to a 1-D curve in the workspace that sweeps 
through each point in $\mathcal W$ only once. 
Depending on the continuous curve, $P(t)$ could take various forms. For example, the sweep line may be straight in a search-and-rescue scenario. Or the sweep line may be circular in a search-and-capture scenario. 
In the rest of this paper, we simply refer to a continuous monotone sweep schedule as a \emph{sweep schedule}. Next, we introduce the notions of \emph{arrival time} and \emph{monotone chain}.
\begin{definition}[Arrival Time]
For a search schedule $P(t)$, and a point in the workspace $o\in\mathcal W$,
the arrival time at $o$, $arrival(o)$, is defined as the time step $t$ when $o\in P(t)$.
\end{definition}

\begin{definition}[Monotone Chain]
For a bounded 2D chain: $\tau(s): [0,1]\rightarrow\mathcal{W}$, where $\tau$ is a continuous function, it is considered as a monotone chain to the search schedule $P(t)$ if and only if
\[s_1 < s_2 \Leftrightarrow arrival(\tau(s_1)) < arrival(\tau(s_2)).\]
\end{definition}
In a search schedule or plan, $P(t)$ may be intersected by obstacles. In this case, $P(t)$ are separated into multiple continuous segments; each of these segments requires a dedicated group of robots. That is, robots on one continuous segment cannot provide coverage of other segments. 


With the probabilistic robot sensing model, we formulate the robot team scheduling problem for a given sweep schedule as the following,
\begin{problem}
Given a sweep schedule $P(t)$ for a 2D region $\mathcal W$, and a group of robots with coverage capability function $\rho$, what is the minimum number of robots required to execute the sweep schedule on the sweep line, such that 
for every point $o$ in $\mathcal W$, the probability that $o$ is covered is at 
least some fixed $0 < \rho_0 \le 1$?
\end{problem}

\section{Optimal Robot Allocation}

Our proposed algorithm can be divided into two steps at the high level. 
In the first step, the algorithm conducts a \emph{generalized boustrophedon 
decomposition} for the given environment along the sweep line, 
which generates a \textit{directed acyclic graph} (DAG) representation 
of the workspace $\mathcal W$ for a given sweep schedule. 
Using the DAG, a max-flow based algorithm is then applied to compute the 
minimum number of robots required for executing the sweep schedule, 
as well as the corresponding arrangement of robots.

\subsection{Generalized Boustrophedon Decomposition}
In solving search and coverage problems, various decomposition techniques 
have been proposed, including trapezoidal, Voronoi, boustrophedon, Morse decompositions, 
and so on \cite{huang2001optimal, choset2000coverage, breitenmoser2010voronoi, acar2002morse}.
For our robot allocation task, it is also natural to start with a decomposition 
of the environment. 
However, we need a decomposition supporting non-straight 
boundaries between the decomposed cells, created by the sweep schedule. 
For this, we propose a generalization of boustrophedon decomposition. 

Before describing the generalization of boustrophedon decomposition, 
we briefly introduce boustrophedon decomposition (readers are referred to 
\cite{choset2000coverage} for further details), which in turn is based on 
trapezoidal decomposition. The difference is that it removes the sweeping 
events that cross inner vertices. As illustrated in Fig.~\ref{fig:trebou},
compared with trapezoidal decomposition, boustrophedon decomposition 
has fewer cells, which leads to less (back-and-forth) boustrophedon motions.

\begin{figure}[ht]
    \centering
    \begin{tikzpicture}
    \fill[gray] (0, 0) -- (0.4, -0.5) -- (0.6, -0.6) -- (0.8, -0.5) -- (1.1, 0.1) -- (0.6, 0.5) -- (0.3, 0.5) -- cycle;
    \draw (-1, - 1) -- (2, -1); 
    \draw (-1, 1) -- (2, 1); 
    \draw[dashed] (0, -1) -- (0,1);
    \draw[dashed] (0.4, -1) -- (0.4,-0.5);
    \draw[dashed] (0.6, -1) -- (0.6,-.6);
    \draw[dashed] (0.8, -1) -- (0.8,-0.5);
    \draw[dashed] (1.1, -1) -- (1.1,1);
    \draw[dashed] (0.6, 0.5) -- (0.6,1);
    \draw[dashed] (0.3, 0.5) -- (0.3,1);
    
    \node[text=black] at (-0.5, 0.0) {cell};
    \node[text=black] at (1.5, 0.0) {cell};
    
    \fill[gray] (4, 0) -- (4.4, -0.5) -- (4.6, -0.6) -- (4.8, -0.5) -- (5.1, 0.1) -- (4.6, 0.5) -- (4.3, 0.5) -- cycle;
    
    \draw (3, - 1) -- (6, -1); 
    \draw (3, 1) -- (6, 1); 
    
    \draw[dashed] (4, -1) -- (4, 1);
    \draw[dashed] (5.1, -1) -- (5.1, 1);

    \node[text=black] at (3.5, 0.0) {cell};
    \node[text=black] at (4.5, -.8) {cell};
    \node[text=black] at (4.5, 0.7) {cell};
    \node[text=black] at (5.5, 0.0) {cell};

    \end{tikzpicture}
    \caption{[left] A trapezoidal decomposition, creating a total of $9$ cells. [right]
    Boustrophedon decomposition of the same environment, creating only $4$ cells.}
    \label{fig:trebou}
\end{figure}
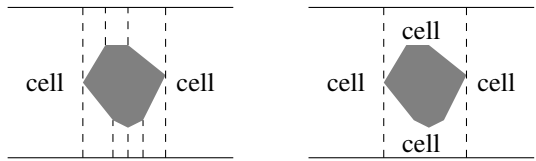

We extend the boustrophedon decomposition from using only vertical sweep lines 
to allowing the use of any \emph{continuous monotone sweep schedule}.
We are given a sweep schedule $P(t)$ for a workspace $\mathcal W$, 
which could take any curved form (see Fig.~\ref{fig:Bou}).
By following the sweep schedule, it is possible to
construct a decomposition of $\mathcal W$, based on the events of cell
splitting and merging. 

\begin{figure}[ht]
    \centering
    \begin{overpic}[width = .4\textwidth]{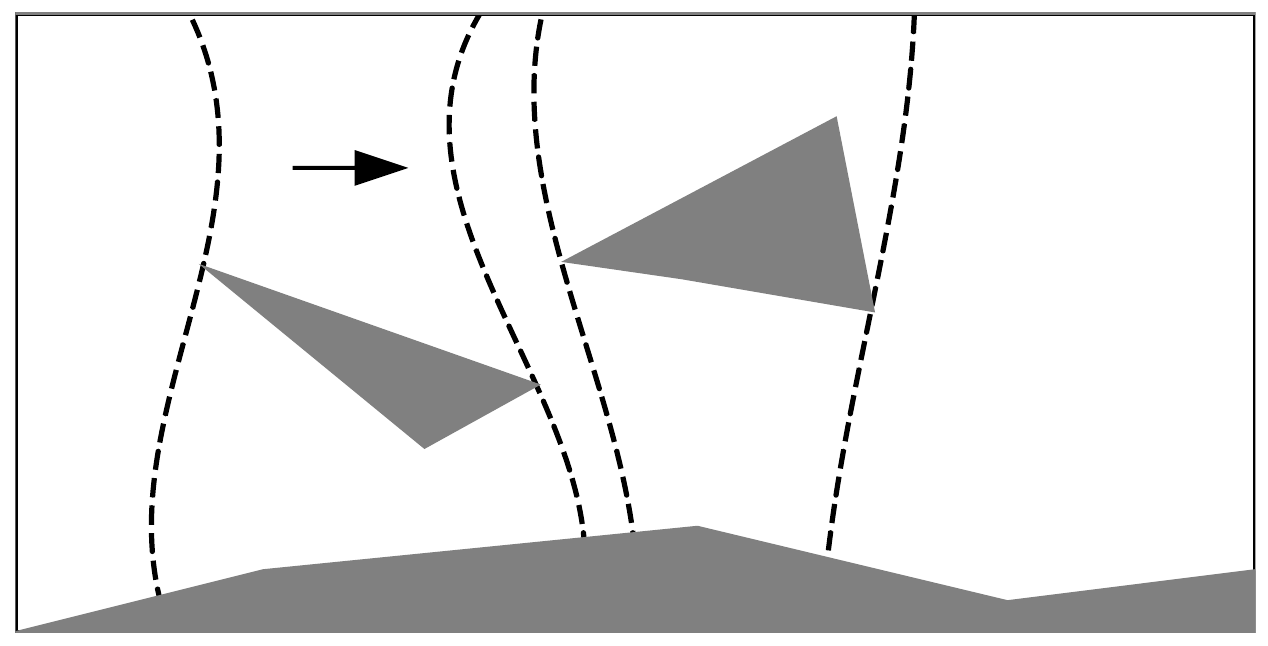}
    \put(8, 35){$v_1$}
    \put(20, 10){$v_2$}
    \put(20, 30){$v_3$}
    \put(36.5, 42){$v_4$}
    \put(55, 20){$v_5$}
    \put(50, 40){$v_6$}
    \put(80, 30){$v_7$}
    \end{overpic}
    \caption{Suppose we have a sweep schedule that sweeps the environment from left 
    to right, with curved sweep fronts. 
    The curves, as they cross critical vertices of objects, are shown as the 
    dashed lines.
    The generalized boustrophedon decomposition decomposes and $\mathcal W$ into 
    $7$ cells, $v_1\sim v_7$. 
    It is important to note here that, for any continuous monotone sweep schedule,
    a decomposition can be obtained. 
    }
    \label{fig:Bou}
\end{figure}

\begin{theorem}
A sweep schedule can be organized in a DAG by the generalized boustrophedon decomposition.
\end{theorem}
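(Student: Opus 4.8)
The plan is to realize the decomposition cells as the vertices of a directed graph whose edges are oriented by the (monotone) arrival-time function, and then to rule out directed cycles by using that same function as a strictly increasing potential.

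First I would make the decomposition precise as a sweep driven by $arrival(\cdot)$, which, by the defining property of a continuous monotone sweep schedule, assigns to every point of $\mathcal{W}$ a unique value $t$. Sweeping $t$ from its minimum to its maximum, the intersection $P(t) \cap \mathcal{W}$ is at each instant a disjoint union of connected front segments; a \emph{critical event} occurs precisely at the times where a front segment meets a critical vertex of an obstacle (or of the workspace boundary), causing a segment to split into two or two segments to merge into one. Between two consecutive critical events the combinatorial structure of the front (the number and identity of its connected segments) is constant, and the region of $\mathcal{W}$ swept by one fixed front segment over such a maximal time interval is declared a \emph{cell}. This is exactly the generalization of boustrophedon decomposition to a curved monotone front, with the role of the $x$-coordinate played by $arrival(\cdot)$.

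Next I would build the graph $G=(V,E)$: let $V$ be the set of cells, and add a directed edge $(C,C')$ whenever the front segment defining $C$ transitions into the front segment defining $C'$ across a single critical event, so that a split sends a cell to its two children and a merge sends two cells to their common child. For each cell $C$ I would record the time interval $[t_{\min}(C), t_{\max}(C)]$ over which its generating front segment exists; since the swept region is nonempty, $t_{\min}(C) < t_{\max}(C)$, and by construction every edge $(C,C')$ satisfies $t_{\max}(C) = t_{\min}(C')$, hence $t_{\min}(C) < t_{\min}(C')$. Acyclicity then follows immediately: along any directed path the value $t_{\min}$ strictly increases, so a directed cycle $C_1 \to \cdots \to C_k \to C_1$ would force $t_{\min}(C_1) < t_{\min}(C_1)$, a contradiction, and therefore $G$ is a DAG.

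The main obstacle I anticipate is not the acyclicity argument, which is essentially immediate once the potential is in place, but the preceding well-definedness claims: that the monotone sweep schedule yields a discrete (for a polygonal environment, finite) set of critical events, that the front topology is constant between consecutive events, and that the region swept between events is a genuine cell with a nondegenerate time interval. Establishing these relies crucially on the monotonicity property $s_1 < s_2 \Leftrightarrow arrival(\tau(s_1)) < arrival(\tau(s_2))$, which guarantees that $arrival(\cdot)$ behaves like a Morse-type height function on $\mathcal{W}$ and prevents the front from ever revisiting a point; once this structure is secured, orienting the cell-adjacency graph by time and reading off the DAG property is routine.
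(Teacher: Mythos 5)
Your proposal is correct and builds the same object the paper does: the cells of the generalized boustrophedon decomposition become vertices, and edges follow the split/merge events of the sweep front. Where you differ is in emphasis, and the difference is in your favor on the main point: the paper's proof describes the construction but never actually argues acyclicity, whereas you supply the missing argument by using $t_{\min}(C)$ (equivalently, the arrival time) as a strictly increasing potential along every directed edge, which is exactly the step that makes ``DAG'' more than an assertion. You also correctly flag that the real work is in the well-definedness of the decomposition (finitely many critical events, constant front topology between events), which both you and the paper leave at the level of a plausibility argument resting on the monotonicity of $arrival(\cdot)$. What you omit, and the paper includes, is the augmentation with a global source $s$ and terminal $t$ together with the concave-vertex cases (a front segment that is born or dies at a concave vertex yields a cell with no parent or no child, which must be wired to $s$ or $t$); this does not affect acyclicity --- attaching a universal source and sink to a DAG keeps it a DAG --- but it is needed for the subsequent circulation-with-demand reduction, so a complete proof in the paper's context should mention it.
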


\begin{proof}
Following the sweep schedule, we can conduct a generalized boustrophedon 
decomposition of the workspace $\mathcal W$. 
A node in the DAG will represent a cell after the decomposition, whose 
parents and children are the predecessor and successor cells along the 
sweep schedule. 

Additionally, we add a source node which links to all nodes without a parent 
and add a terminal node which links from all nodes without a child.
Since obstacles in the environment can contain concave vertices, we must 
consider two special cases involving concave vertices for the construction of DAG, 
as illustrated in Fig.~\ref{fig:concave_vertices}. 
When a line segment of the sweep schedule ends at a concave vertex, 
the corresponding node in the DAG is linked to the terminal node $t$.
When a line segment of the sweep schedule starts at a concave vertex,
the corresponding node in the DAG is linked from the source node $s$.
In mapping these scenarios to detailed plans, it means that 
some robots will start later or end earlier compared with 
the others.

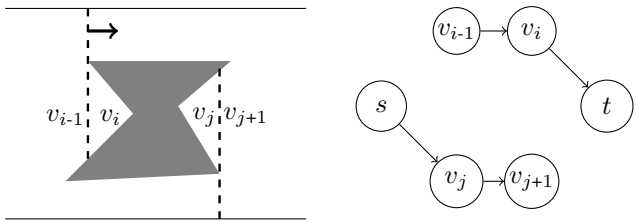
\begin{figure} [ht]
    \centering
    \begin{tikzpicture}
    \fill[gray] (-0.2, 0) -- (1.85, 0.1) -- (1.3, 1) -- (2.0, 1.6) -- (0.1, 1.6) -- (0.7, 0.9) -- cycle;
    \draw (-1, - 0.5) -- (3, -.5); 
    \draw (-1, 2.3) -- (3, 2.3); 

    \draw[dashed, line width=0.9pt, black] (0.1, 2.3) -- (0.1, 0.3);
    \draw[dashed, line width=0.9pt, black] (1.85, 1.5) -- (1.85, -0.5);

    \draw(-0.2, 0.85) node[text=black] { $v_{i\text{-}1}$};
    \draw(0.4, 0.85) node[text=black] {$v_i$};
    \node[text=black] at (1.65, 0.85) {$v_j$};
    \node[text=black] at (2.2, 0.85) {$v_{j\text{+}1}$};
    
    \draw(4, 1.0) node [circle, radius = 0.12, draw, inner sep=4.5](s){ $s$};
    
    \draw(5, 2.0) node [circle, radius = 0.12, draw, inner sep=0.8](vi1){ $v_{i\text{-}1}$};
    \draw(6, 2.0) node [circle, radius = 0.12, draw, inner sep=3](vi){ $v_{i}$};
    
    \draw(5, 0.0) node [circle, radius = 0.12, draw, inner sep=3](vj){ $v_{j}$};
    \draw(6, 0.0) node [circle, radius = 0.12, draw, inner sep=0.8](vj1){ $v_{j\text{+}1}$};
    
    \draw(7, 1.0) node [circle, radius = 0.12, draw, inner sep=4.5](t){ $t$};
    
    \draw[->] (vi) -- (t);
    \draw[->] (vi1) -- (vi);
    
    \draw[->] (vj) -- (vj1);
    \draw[->] (s) -- (vj);
    
    \draw[very thick, ->] (0.1, 2.0) -- (0.5,2.0);
    \end{tikzpicture}
    \caption{An example that contains two concave scenarios. As a result of the DAG construction, some robots will
    start their work at $v_j$ (by having $s$ as its parent) and some robots will end their work at $v_i$ (by having $t$ as its child).}
    \label{fig:concave_vertices}
\end{figure}
\end{proof}


The implementation of the generalized boustrophedon decomposition is similar 
to vertical decomposition, we provide here an implementation 
Alg.~\ref{alg:genbou} adapted from \cite{lavalle2006planning} under the 
\emph{generation position} assumption (i.e., there are no degenerative 
settings).
Basically, the algorithm works by maintaining a binary search tree for 
\emph{boundary chains} representing cell boundaries. The algorithm considers 
two types of events during the sweeping process: splitting a cell into two 
cells and merging two cells into one cell.
Since the time cost mainly comes from maintaining the binary search tree of 
chains, with an efficient implementation, Alg.~\ref{alg:genbou} requires 
$O(n \log n)$ time, where $n$ is the complexity of the environment.

\begin{algorithm}[ht]
\SetKwFunction{genbou}{GenBouDecomp}
\KwData{$P(t)$: a sweep schedule. $\mathcal W$: the workspace.}
\KwResult{a DAG from the decomposition.}
\vspace{1mm}
\SetKwComment{comment}{\%}{}
\DontPrintSemicolon
\SetKw{arrival}{Arrival}

1. Separate the boundaries of obstacles into a set of \emph{monotone chains},
where each chain has the points on it \emph{arrival time} arranged in an 
increasing order.;\;
\vspace{0.5mm}

2. $T\gets$ A binary search  tree of the chains\;
\vspace{0.5mm}

3. Construct an \emph{event array} that contains the events of the start of a chain and the end of a chain, sorted by their occurrence time during the sweep.\;
\vspace{0.5mm}

4. Iterate over the event array, which inserts and deletes chains from the binary search tree $T$, while making sure that $T$ represents the current order of the chains.\;
\vspace{0.5mm}
\begin{adjustwidth}{0.5cm}{}
\begin{adjustwidth}{0.5cm}{}
{\hspace{-.5cm}4.1.}
When two chains are added to $T$, a cell is split into two new cells in the case of a convex
vertex, or a new cell is created for a concave vertex, where an edge directed from $s$ is added.\;
\end{adjustwidth}
\begin{adjustwidth}{0.5cm}{}
{\hspace{-.5cm}4.2.}
When two chains are removed from $T$, two cells are merged into a new cell in the case of 
a convex vertex, or a cell disappears for a concave vertex, where an edge is added directed to $t$.\;
\end{adjustwidth}
\end{adjustwidth}
\vspace{1mm}
5. Return the DAG constructed based on $P(t)$.
\vspace{1mm}
\caption{\protect\genbou{$P$, $\mathcal{W}$}: Generalized Boustrophedon Decomposition} 
\label{alg:genbou}
\end{algorithm}

\subsection{Reduction to Circulation with Demand}
Since we are working with \emph{monotone sweep schedules}, for any point $p$ in 
$\mathcal W$, it can only be contained in $P(t')$ for a single $t'$, i.e., each 
point $p \in \mathcal W$ is only swept once. 
Denote the length of the line segment where it is contained as $L$, which 
requires at least $\zeta(L)$ robots inside that segment at time $t'$
(recall that $\zeta$ is the primitive defined in Sec.~\ref{sec:sensing}).
For each decomposed cell, the minimum number of robots required is $\zeta(l_{max})$, where $\ell_{max}$ 
is the maximum length of the sweep line inside that cell.
Given a DAG $G(V,E)$ that represents the sweep schedule, the arrangement 
problem of the robots along the sweep line can be transformed into a network 
flow problem on the DAG.
For each node $v\in G$, there is a requirement of coverage for the node, 
$demand(v)$, which can be computed as $\zeta(v.\ell_{max})$.
Given the DAG and the demands, we are then to ``flow'' the robots through 
the schedule, allocating a certain number of robots to each decomposed cell 
along the way to satisfy these demands. 

\begin{figure}[ht]
    \centering
    \begin{tikzpicture}
    \draw(0,0) node [circle, radius = 0.12, draw](v1){ $v_1$};
    \draw(1, -0.5) node [circle, radius = 0.12, draw](v2){ $v_2$};
    \draw(1, 0.5) node [circle, radius = 0.12, draw](v3){ $v_3$};
    \draw(2, 0.0) node [circle, radius = 0.12, draw](v4){ $v_4$};
    \draw(3, -0.5) node [circle, radius = 0.12, draw](v5){ $v_5$};
    \draw(3, 0.5) node [circle, radius = 0.12, draw](v6){ $v_6$};
    \draw(4, 0.0) node [circle, radius = 0.12, draw](v7){ $v_7$};
    
    \draw(-1.2, -.5) node [circle, radius = 0.2, draw,inner sep=4.5](s){ $s$};
    
    \draw(5.2, -.5) node [circle, radius = 0.2, draw,inner sep=4.5](t){ $t$};
    
    \draw[->] (v1) -- (v2);
    \draw[->] (v1) -- (v3);
    \draw[->] (v2) -- (v4);
    \draw[->] (v3) -- (v4);
    \draw[->] (v4) -- (v5);
    \draw[->] (v4) -- (v6);
    \draw[->] (v5) -- (v7);
    \draw[->] (v6) -- (v7);
    
    \draw[->] (s) -- (v1);
    \draw[->] (v7) -- (t);
    \end{tikzpicture}
    \caption{The constructed DAG from the example in Fig.~\ref{fig:Bou}}
    \label{fig:DAG}
\end{figure}
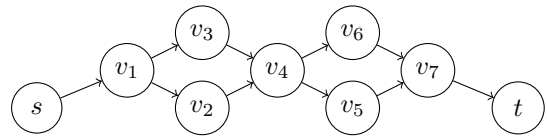

Specifically, our problem may be further cast as a \emph{circulation with demand} 
problem \cite{kleinberg2006algorithm}, with the following augmentation. We replace 
each node $v$ in $G$ with two vertices $v_1$ and $v_2$ and replace every edge 
$uv$ in the previous graph with edge $u_2 v_1$, as illustrated in Fig.~\ref{fig:flow}. 
The edge between $v_1$ and $v_2$ has flow demand of $demand(v)$. 
The following minimum circulation with demand problem is then obtained.

\begin{algorithm}[ht]
\DontPrintSemicolon
\SetKwFunction{mindag}{MinSweepDAG}
\SetKwFunction{dflow}{MinCirculationWithDemand}
\SetKwFunction{addedge}{add\_edge}
\SetKwFunction{addvertex}{add\_vertex}
\KwData{$dag(V, E)$: the DAG obtained from \genbou. $\zeta$: the sensing requirement primitive function.}
\KwResult{$dag$: the updated $dag$ with the robot allocation information}
\vspace{0.5mm}
$G'\gets$ a new empty graph;\;
\vspace{0.5mm}

\For{$v\in dag$}{
\vspace{0.5mm}
    $G'$.\addvertex$(v_1),$ $ G'$.\addvertex($v_2$);\;
\vspace{0.5mm}
    
    $G'.$\addedge($v_1, v2$, capa=$\infty$, demand=$\zeta(v.\ell_{max})$);\;
\vspace{0.5mm}
    
    \For{$u\in dag.neighbor[v]$}{
\vspace{0.5mm}
        $G'$.\addedge($v_2$, $u$, capa=$\infty$, demand=0);\;
    }
}
\vspace{0.5mm}

\dflow($G'$);\;
\vspace{0.5mm}

\For{$v\in dag$}{
\vspace{0.5mm}
    $dag.v.guards\_num\gets$ $G'.flow[v_1][v_2]$;\;
    
\vspace{0.5mm}
    \For{$u\in dag.neighbor[v]$}{
        $dag.flow[v][u] = G'.flow[v_2][u_1]$;\;
    }
    
}

\Return{$dag$}\;


\caption{\protect\mindag{$dag$, $\zeta$}{}}
\label{alg:mindag}
\end{algorithm}

\begin{figure*}[ht]
    \centering
    \begin{tikzpicture}[scale = 1, every node/.style={inner sep=4}]
    \draw(0,0) node [circle, draw, scale = 0.8](v11){ $v_{11}$};
    \draw(1,0) node [circle, draw, scale = 0.8](v12){ $v_{12}$};
    
    \draw(2, -0.5) node [circle, draw, scale = 0.8](v21){ $v_{21}$};
    \draw(3, -0.5) node [circle, draw, scale = 0.8](v22){ $v_{22}$};
    
    \draw(2, 0.5) node [circle, draw, scale = 0.8](v31){ $v_{31}$};
    \draw(3, 0.5) node [circle, draw, scale = 0.8](v32){ $v_{32}$};
    
    \draw(4, 0.0) node [circle, draw, scale = 0.8](v41){ $v_{41}$};
    \draw(5, 0.0) node [circle, draw, scale = 0.8](v42){ $v_{42}$};
    
    \draw(6, -0.5) node [circle, draw, scale = 0.8](v51){ $v_{51}$};
    \draw(7, -0.5) node [circle, draw, scale = 0.8](v52){ $v_{52}$};
    
    \draw(6, 0.5) node [circle, draw, scale = 0.8](v61){ $v_{61}$};
    \draw(7, 0.5) node [circle, draw, scale = 0.8](v62){ $v_{62}$};
    
    \draw(8, 0.0) node [circle, draw, scale = 0.8](v71){ $v_{71}$};
    \draw(9, 0.0) node [circle, draw, scale = 0.8](v72){ $v_{72}$};
    
    \draw(-1, -.5) node [circle, draw](s){ $s$};
    \draw(10, -.5) node [circle, draw](t){ $t$};
    
    \draw[->] (v12) -- (v21);
    \draw[->] (v12) -- (v31);
    \draw[->] (v22) -- (v41);
    \draw[->] (v32) -- (v41);
    \draw[->] (v42) -- (v51);
    \draw[->] (v42) -- (v61);
    \draw[->] (v52) -- (v71);
    \draw[->] (v62) -- (v71);
    
    \draw[->] (v11) -- node[above]{\footnotesize $3$} (v12) ;
    \draw[->] (v21) -- node[above]{\footnotesize $2$} (v22) ;
    \draw[->] (v31) -- node[above]{\footnotesize $2$} (v32) ;
    \draw[->] (v41) -- node[above]{\footnotesize $3$} (v42) ;
    \draw[->] (v51) -- node[above]{\footnotesize $2$} (v52) ;
    \draw[->] (v61) -- node[above]{\footnotesize $2$} (v62) ;
    \draw[->] (v71) -- node[above]{\footnotesize $3$} (v72) ;
    
    \draw[->] (s) -- (v11) ;
    \draw[->] (v72) -- (t) ;
    
    \draw[dashed] (t) .. controls(9, 1.5) and (5, 1.5) .. (4.5, 1.5);
    \draw[dashed,->] (4.5,1.5) .. controls(4,1.5) and (0,1.5) .. (s);
    
    \draw(3, -2) node [circle, draw, dashed] (s'){$s'$};
    \draw(6, -2) node [circle, draw, dashed] (t'){$t'$};
    
    \draw[dashed,->] (s') -- (v41);
    \draw[dashed,->] (v41) -- (t');
    
    \draw[dashed,->] (s') -- (v42);
    \draw[dashed,->] (v42) --  (t');
    
    \end{tikzpicture}
    \caption{Transform the DAG in Fig.~\ref{fig:DAG} into a circulation with demand problem. 
    The value above each edge represents the demand of the edge, which is eliminated when it is zero.
    The source node is $s$, and the target node is $t$. $s'$ and $t'$ are the auxiliary source and target for solving the ``circulation with demand'' problem.
    Also, auxiliary edges are added between $s', t'$ and every other vertex.}
    \label{fig:flow}
\end{figure*}
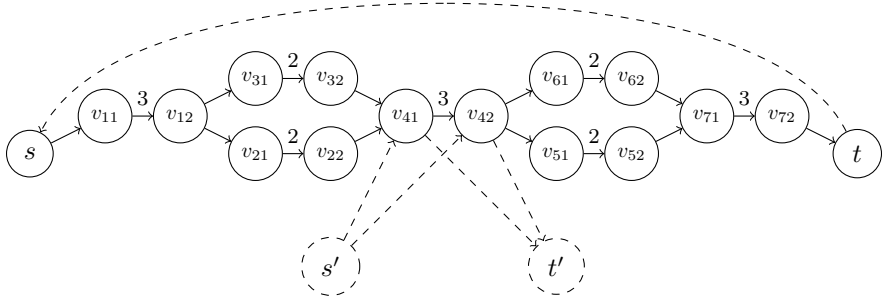

\begin{problem}[Minimum Circulation with Demand]
There is a graph $G(V,E)$, where $s,t$ are the source and terminal nodes. 
Every edge $uv$ in $G$ has a demand of $d(uv)$ and capacity of $c(uv)$. 
Compute minimal network flow from $s$ to $t$ that saturates all the edge 
demands.
\end{problem}

Readers are referred to \cite{kleinberg2006algorithm} for 
the details of the classical ``circulation with demand'' problem solution with max-flow. 

Alg.~\ref{alg:mindag} outlines the operations
used in this section. We note that the 
graph data structure needs to have the functionality of adding vertex and adding 
edge with edge demand, and $G'.flow[v_1][v_2]$ denotes the flow from $v_1$ to $v_2$ on $G'$.
\subsection{The Complete Allocation Algorithm}
The overall algorithm for robot allocation is described as in Alg.~\ref{alg:overall}.
To analyze the running time of Alg.~\ref{alg:overall} for a polygonal environment, 
we denote the environment complexity (number of vertices) as $n$. 
As mentioned in the previous section, the generalized boustrophedon decomposition takes $O(n\log n)$ time. 
As the generation of a node in the DAG comes from some chain insertion or deletion events, and an event would require one vertex to happen, 
there is at most $O(n)$ decomposed cells, i.e., nodes in the DAG. 
Similarly, adding edges between nodes comes from some chain insertion or deletion events, and the number of events is at most $O(n)$. 
So, both the number of edges and the number of nodes in the DAG is $O(n)$.
Moreover, it is easy to see that the DAG is a planar graph since a node corresponds to a decomposed cell, and there is an edge between nodes only if the two corresponding cells are adjacent.
The circulation with demand problem requires solving two max flow problem based on the DAG. 
If we use the push-relabel algorithm \cite{cheriyan1989analysis} that runs in $O(V^2\sqrt{E})$, solving the max flow for this problem will cost $O(n^{2.5})$, 
since $|V|, |E| = O(n)$.

\begin{algorithm}[ht]
\SetKwFunction{minsweep}{MinSweep}
\KwData{$P(t)$: a sweep schedule. $\mathcal W$: a compact workspace.
$\rho_0$: required sensing probability guarantee.
}
\KwResult{$n$: the minimum number of robot guards required. $plan$: the corresponding allocation plan}
\SetKwComment{comment}{\%}{}

\vspace{1mm}
Construct $\zeta$ based the sensing model and $\rho_0$\;
\vspace{1mm}

\begin{small}
\comment{Primitive function $\zeta$ is to compute the minimum number of robots required for a continuous line segment}
\end{small}
\vspace{1mm}
$dag\leftarrow \genbou(P, {\mathcal W})$;\;

\vspace{1mm}
$dag\gets \mindag(dag, \zeta)$;\;

\vspace{1mm}
$plan\gets dag$;\;

\vspace{1mm}
\begin{small}
\comment{The plan of the robots can be constructed based on the flow on the $dag$.} 
\end{small}

\vspace{1mm}
\Return{$dag.s.guards\_num$, plan};\;

\caption{\protect\minsweep{$P, {\mathcal{W}}, \rho_0$}: Computing Minimum Number of Robots for a Sweep Schedule}
\label{alg:overall}
\end{algorithm}

\begin{remark}
In the case of having a fixed number of robots, 
and the objective is to 
maximize the minimum coverage probability of a point in $\mathcal W$.
We can simply apply binary search on the minimum coverage probability we can
guarantee, where Alg.~\ref{alg:overall} 
can be used to decide whether some coverage probability probability can be guaranteed by the fixed number of robots.
\end{remark}



    


\section{Simulation Evaluation}
In this section, we perform a numerical evaluation of our proposed method
with two goals: (1) confirms the scalability and (2) observe the behavior 
of the method as input parameters change. 
We implemented our proposed algorithms in C++. Dinic's algorithm is used to solve the max-flow problem \cite{dinitz1970algorithm} for simplicity. 
The methods are evaluated at an Intel\textsuperscript{\textregistered} Core\textsuperscript{TM} i5-10600K CPU at 4.1HZ.
For the three use cases (Fig.~\ref{fig:sweep} (b)(c)(d)), we
programmatically create a large number of test cases, with up to 
$6,000$ randomly generated polygonal obstacles. 
The number of vertices of each polygon ranges from 3 to 50, making the total vertices up to 
$100,000+$. 
Fig.~\ref{fig:cases} shows the largest problem instances for the experiments with 
around $100,000$ vertices.
\begin{figure}[ht]
    \centering
    \includegraphics[width=.95\linewidth]{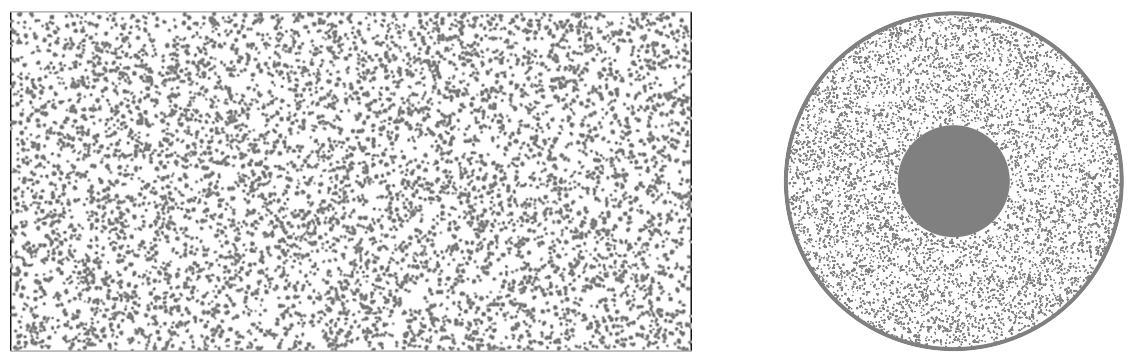}
    \caption{Examples of programmatically generated test environments with total 
    vertices around $100,000$.}
    \label{fig:cases}
\end{figure}

\textbf{Algorithm performance.} In a first set of evaluations, under an exponentially 
decaying sensing 
model, $\rho(r) = e^{-c\cdot r}$, we test the performance of our algorithm over 
the set of instances. Example allocation of robots along the sweep frontiers for 
the three cases are shown in Fig.~\ref{fig:sweep}(a) and Fig.~\ref{fig:simulations}. 
We limited the number of robots to be small so that the trajectories are more easily 
observed. It can be seen that the trajectories can vary significantly along the 
sweep frontiers for each case. 
\begin{figure}[ht]
    \centering
    \includegraphics[width=.45\linewidth]{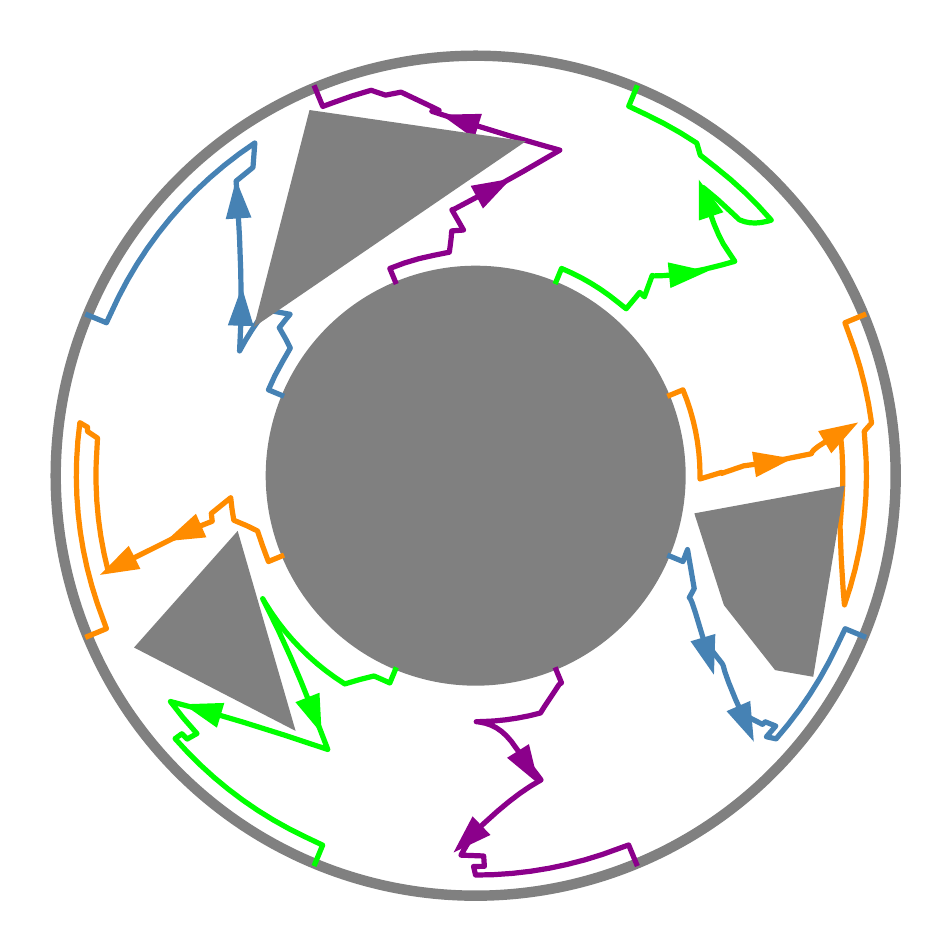}\hspace{2mm}
    \includegraphics[width=.45\linewidth]{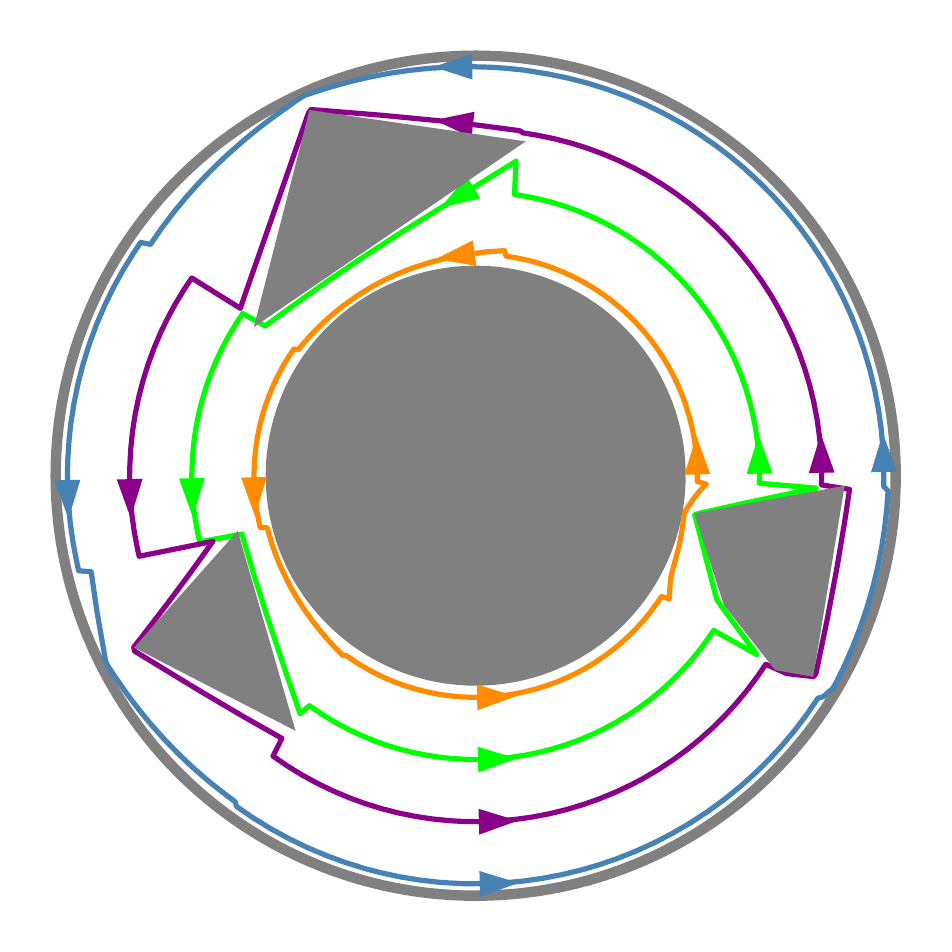}
    
    \caption{Example robot trajectories computed by our method for circular and radial 
    sweep use cases, respectively.
    }
    \label{fig:simulations}
\end{figure}

In Fig.~\ref{fig:simulations_runtime}, the computational performance is thoroughly 
evaluated. As can be observed, our method scales fairly well, taking less than two seconds
to handle all cases, even those involving over $100,000$ vertices. 
%
Interestingly, the experimental running time is almost linear, even using the 
less efficient Dinic's algorithm. 
We suspect the near linear running time is due to the fact that the DAG is 
a planar graph; studies show that max-flow for planar graphs can be computed 
in $O(n\log n)$ \cite{borradaile2009n}. 
Although the graph constructed when solving the ``circulation with demand'' 
problem is not planar, large portions of it are planar. 
This could reduce the actual time complexity of running the max-flow algorithm.

\begin{figure}[h]
    \centering
    \includegraphics[width=0.9\linewidth]{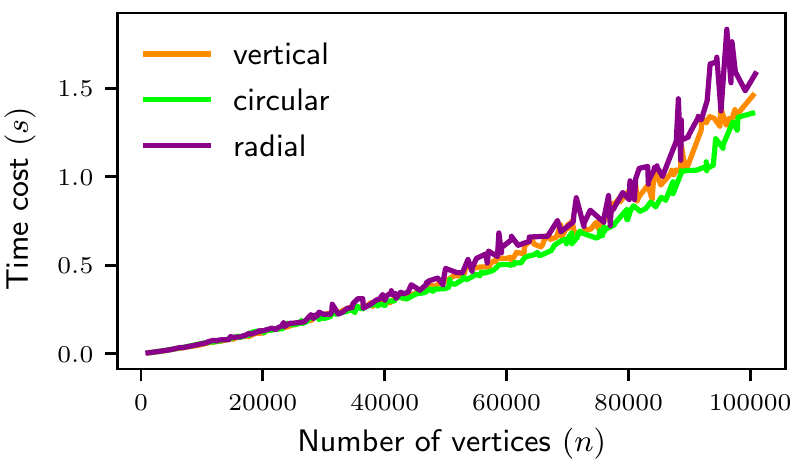}
    \caption{Running time in seconds with respect to environment complexity (number of vertices). }
    \label{fig:simulations_runtime}
\end{figure}

\textbf{Different environment settings.}
Lastly, we evaluate the impact of environmental changes, considering 
factors including the spatial distribution of polygonal obstacles as well as 
the size distribution of the obstacles. 
All in all, three settings are considered: (1) The polygons are regularly distributed 
and are of similar size, (2) the polygons are randomly distributed and are of similar size,
and (3) the polygons are regularly distributed, and their sizes can vary dramatically. 
The first and the third settings are illustrated in the top row of Fig.~\ref{fig:runtime_env}.
For these settings, we compare the time it takes to compute solutions for many polygonal
obstacles and also the number of robots required to achieve $80\%$ probabilistic guarantee 
under the exponential decay sensing model.
As we can observe, there is little difference as the settings change. 

\begin{figure}[h]
    \centering
    \hspace{5mm}
    \includegraphics[width = 0.44\linewidth]{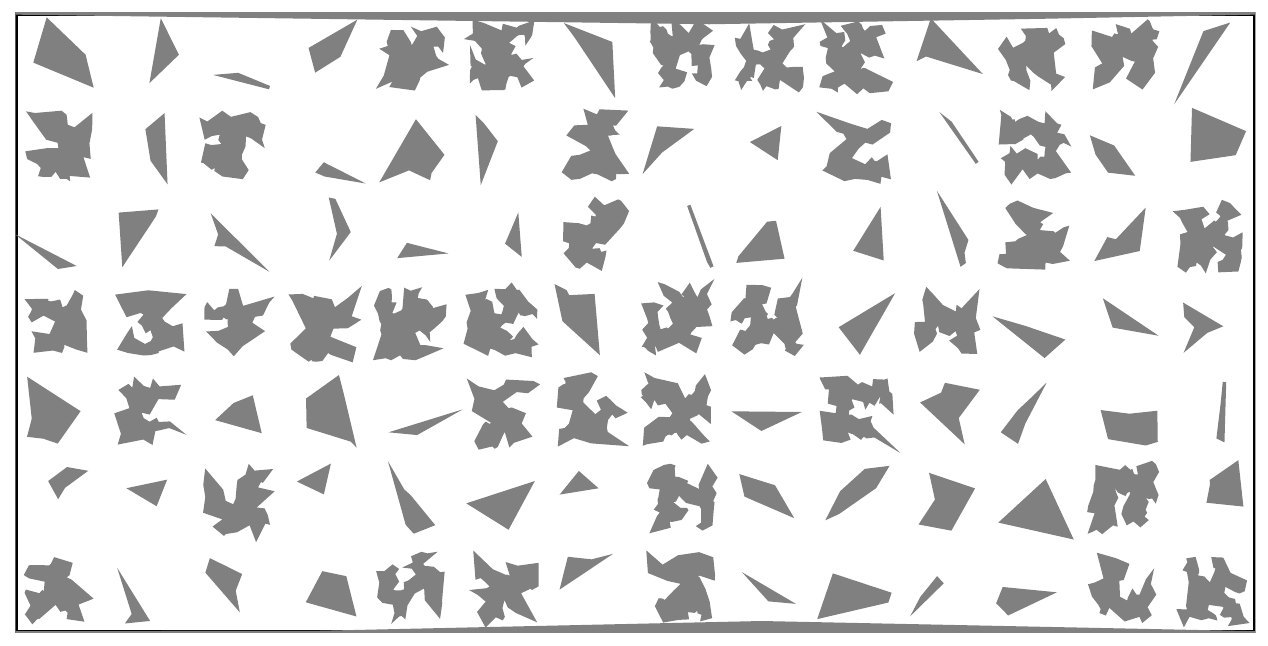}\hspace{1mm}
    \includegraphics[width = 0.44\linewidth]{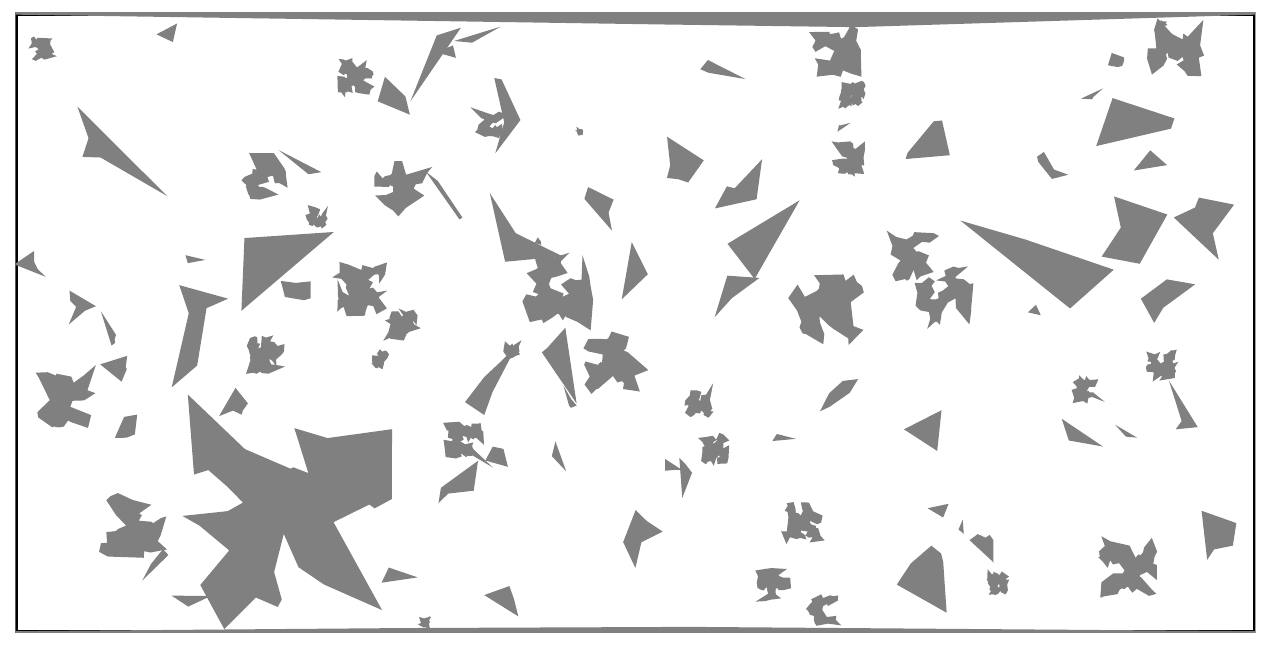}
    \vspace{3mm}
    
    \includegraphics[width=.47\linewidth]{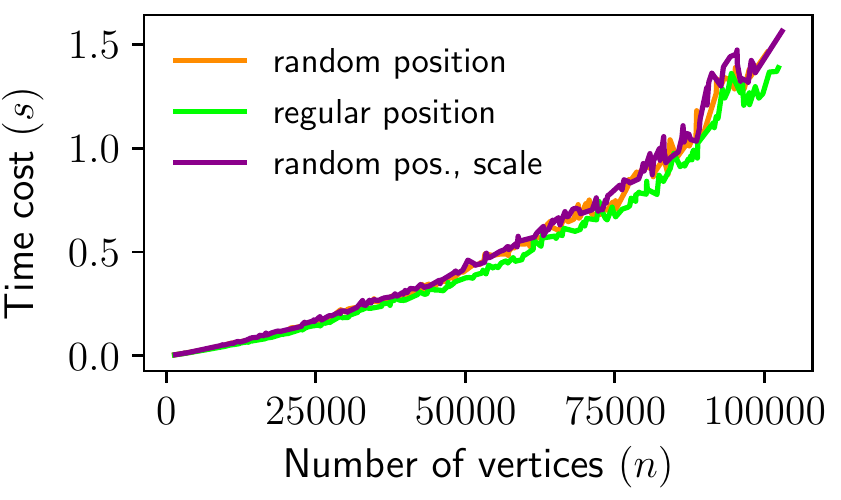}
    \includegraphics[width=.48\linewidth]{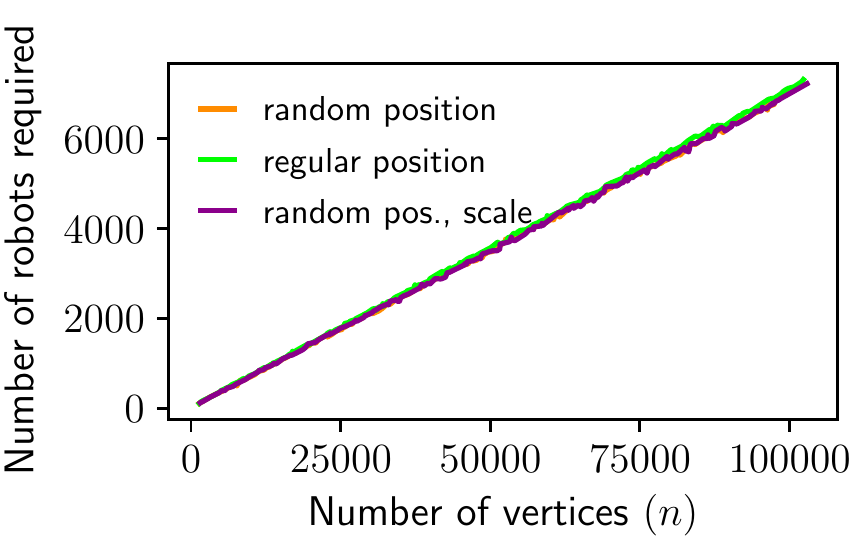}
    \caption{
    [top] Random instance with regularly distributed obstacles and instance with random obstacle scales and positions.
    [bottom] Running time for different randomly generated environments and the number of robots required for different randomly generated environments.}
    \label{fig:runtime_env}
\end{figure}

\section{Conclusion }

In this paper, we studied the problem of allocating a minimum number of robots for a sweep schedule with a probabilistic line sensing model, where a desired level of scanning quality can be guaranteed. 
Towards this, a novel decomposition technique is proposed that generalizes the well-known boustrophedon decomposition. The decomposition leads naturally to  a transformation of the problem into a network-flow problem. Due to the decomposition and the transformation, our proposed algorithm runs in low polynomial time and even near $\Tilde{O}(n)$ in simulation experiments for polygonal environments, where $n$ is the complexity of the environment, measured as the number of vertices of polygons. Extensive simulation-based evaluation corroborates the effectiveness of our algorithm, which is applicable to multiple types of environments.  


\bibliographystyle{IEEETran}
\bibliography{bib/references}

\end{document}